\definecolor{lightblue}{rgb}{.90,.95,1}
\title{{\Large Technical Note}\\ ~\\
A decision-theoretic model for a principal-agent collaborative learning problem}
\author{Getachew K. Befekadu}
\begin{document}
\maketitle

\renewcommand{\thefootnote}{\arabic{footnote}}

\begin{abstract}
In this technical note, we consider a collaborative learning framework with principal-agent setting, in which the principal at each time-step determines a set of appropriate aggregation coefficients based on how the current parameter estimates from a group of $K$ agents effectively performed in connection with a separate test dataset, which is not part of the agents' training model datasets. Whereas, the agents, who act together as a team, then update their parameter estimates using a discrete-time version of Langevin dynamics with mean-field-like interaction term, but guided by their respective different training model datasets. Here, we propose a decision-theoretic framework that explicitly describes how the principal progressively determines a set of nonnegative and sum to one aggregation coefficients used by the agents in their mean-field-like interaction term, that eventually leading them to reach a consensus optimal parameter estimate. Interestingly, due to the inherent feedbacks and cooperative behavior among the agents, the proposed framework offers some advantages in terms of stability and generalization, despite that both the principal and the agents do not necessarily need to have any knowledge of the sample distributions or the quality of each others' datasets.
\end{abstract}
\begin{keywords} 
Agents, aggregation algorithm, consensus, decision-making problem, generalization, gradient systems, Langevin dynamics, learning problem, mean-field-like interaction, modeling of nonlinear functions, point estimations, random perturbations, stability.
\end{keywords}

\section{Introduction} \label{S1}
In this technical note, we consider a principal-agent collaborative learning framework, in which the principal progressively determines a set of nonnegative and sum to one aggregation coefficients based on how the current parameter estimates from a group of $N$ agents effectively performed in connection with a separate test dataset, that is not part of the agents' training model datasets. Whereas, the agents, who act together as a team, then update their parameter estimates using a discrete-time version of Langevin dynamics with mean-field-like interaction term, but guided by their respective different training model datasets. In particular, we propose a decision-theoretic model that allows the principal to determines a set of appropriate aggregation coefficients used by the agents in their mean-field-like interaction term, that eventually leading them to reach a consensus optimal parameter estimate. Moreover, due to the inherent feedbacks and cooperative behavior among the agents, the proposed collaborative learning offers some advantages in terms of stability and generalization, despite that both the principal and the agents do not necessarily need to have any knowledge of the sample distributions or the quality of each others' datasets. 

Here, it is worth remarking that the decision-theoretic model we proposed can provide new insights in the context of collaborative learning with principal-agent setting, although we acknowledge that there are a number of conceptual and theoretical challenges still need to be addressed. For example, the hypoellipticity property that we imposed on the Langevin dynamics (i.e., an assumption which is related to a strong accessibility property of controllable nonlinear systems driven by white noise) together with a suitable energy-like Lyapunov function, that verifying the stability property, is sufficient for the agents to reach an optimal estimate. However, constructing/finding such a Lyapunov function, that certifies an exponential convergence rate and further evidencing the speed of learning, is difficult due to weak dissipation in the moment direction.

The remainder of this technical note is organized as follows. In Section~\ref{S2}, we provide a formal problem statement, where we consider a collaborative learning framework with principal-agent setting that can be viewed as an extension for enhancing the learning performance or generalization. In particular, we present a decision-theoretic model that allows the principal-agent setting in the context of collaborative learning to exhibit some inherent feedbacks or cooperative behavior, that eventually leading the agents to reach a consensus optimal parameter estimate. Here, we also describe a generic algorithm that provides a viable way to implement such a collaborative learning framework. Finally, Section~\ref{S3} presents numerical results for a typical nonlinear regression problem along with some discussions.

\section{Problem formulation} \label{S2} In this section, we provide a formal problem statement, where we consider a collaborative learning framework, with principal-agent setting that exhibits some inherent feedbacks or cooperative behavior among the agents. In particular, the learning framework consists of the following core ideas:
\begin{enumerate} [(i).]
\item A total of $(K+1)$ datasets, i.e., $Z^{(k)} = \bigl\{ (x_i^{(k)}, y_i^{(k)})\bigr\}_{i=1}^{m_k}$, for $k=1,2, \ldots, K+1$, each with data size of $m_k$. The datasets $\bigl\{ Z^{(k)} \bigr\}_{k=1}^{K+1}$ may be generated from a given original dataset by means of bootstrapping with/without replacement or they may simply represent distributed/decentralized observations associated with point estimation problem in modeling of high-dimensional nonlinear functions. Moreover, we assume that the datasets $\bigl\{ Z^{(k)} \bigr\}_{k=1}^{K}$ will be used for model training purpose, corresponding to each of the $K$ agents, (i.e., numbered $k = 1, 2, \dots, K$), while the last dataset $Z^{(K+1)}$ will be used by the principal for quantifying how the agents' current parameter estimate effectively perform as part of the collaborative learning process.
 \item The agents are largely tasked to search for a parameter estimate $\theta \in \Gamma$, from a finite-dimensional parameter space $\mathbb{R}^p$, such that the function $h_{\theta(x)} \in \mathcal{H}$, i.e., from a given class of hypothesis function space $\mathcal{H}$, describes best the corresponding model training datasets $Z^{(k)}$, $k=1,2, \ldots, K$. Here, we further assume that each agent updates its parameter estimate using a discrete-time approximation of Langevin dynamics with mean-field-like interaction term, but guided by its respective training model dataset.
 \item The principal progressively determines a set of nonnegative and sum to one aggregation coefficients based on how the agents' current parameter estimates effectively performed in connection with the testing dataset $Z^{(K+1)}$. Here, in doing so, the principal uses a naive decision-theoretic model for computing the aggregation coefficients that quantify the way agents influence each other or exchange information among themselves at each time step through their mean-field-like interaction terms. 
\end{enumerate}

In terms of mathematical construct, searching for an optimal parameter $\theta^{\ast} \in \Gamma \subset \mathbb{R}^p$ can be associated with the method of aggregating the {\it steady-state solutions} to the following family of gradient systems, whose {\it time-evolutions} are guided by the corresponding traning datasets $\bigl\{Z^{(k)}\bigr\}_{k=1}^K$, i.e.,
\begin{align}
 \dot{\theta}^{(k)}(t) = - \nabla J_k(\theta^{(k)}(t),Z^{(k)}), \quad \theta^{(k)}(0) = \theta_0, \label{Eq2.1}
\end{align}
with $J_k(\theta^{(k)}, Z^{(k)}) = \frac{1}{m_k} \sum\nolimits_{i=1}^{m_k} {\ell}\bigl(h_{\theta^{(k)}}(x_i^{(k)}), y_i^{(k)}\bigr)$, where $\ell$ is a suitable loss function that quantifies the lack-of-fit between the model and the datasets. Note that such an approach has a series short coming due to the solution may become trapped at local minimum, rather than the global minimum solution. One way of addressing such difficulty is to consider a solution for a family of related stochastic differential equations (SDEs) with mean-field-like interaction term, where the mean-field-like interaction term is typically fostering an inherent feedbacks or collaborative behavior among the agents (e.g., see \cite{r1a} for general discussions on diffusions for global optimization problems; see also \cite{r1b} for additional discussions on learning via dynamical systems). Here, we also assume that $\nabla J_k(\theta,Z^{(k)})$, for each $k \in \{1,2, \ldots, K\}$, is uniformly Lipschitz and further satisfies the following growth condition
\begin{align}
 \bigl\vert \nabla J_k(\theta,Z^{(k)}) \bigr\vert^2 \le L_{\rm Lip} \bigl(1 + \vert \theta \vert^2 \bigr), \quad \forall\theta \in \Gamma \subset \mathbb{R}^p, \label{Eq2.2}
\end{align}
for some constant $L_{\rm Lip} > 0$.  

\subsection{Parameter-updating model for the agents} \label{S2.1} In what follows, we allow each agent to update its parameter estimate using a discrete-time approximation of the following Langevin dynamics with mean-field-like interaction term, i.e.,
\begin{align}
 d\Theta_t^{(k)} =& P_t^{(k)} dt \notag \\
 dP_t^{(k)} =& -\gamma P_t^{(k)} dt - \nabla J_k(\Theta_t^{(k)},Z^{(k)})dt - \eta \left(\Theta_t^{(k)} - \bar{\Theta}_t \right)dt  \notag \\
    & \quad\quad\quad\quad\quad\quad\quad ~~~~+ \left(c/\sqrt{\log(t + 2)}\right) I_p dW_t^{(k)}, \notag \\
   & \quad\quad \quad \quad \quad (\Theta_0^{(k)}, P_0^{(k)}) = (\theta_0^{(k)}, p_0^{(k)}), \quad k =1,2, \ldots, K, \label{Eq2.3}
\end{align}
where $c$ is small positive number, $\gamma > 0$ plays the role of a friction coefficient related to the moment equation, $I_p$ is a $p \times p$ identity matrix, and $W_t^{(k)}$ is a $p$-dimensional standard Wiener process. The term $\eta \bigl(\Theta_t^{(k)} - \bar{\Theta}_t \bigr)$, with $\eta > 0$, $\bar{\Theta}_t = \sum\nolimits_{k=1}^{K} \pi_t^{(k)} \Theta_t^{(k)}$, $\pi_t^{(k)} \ge 0$ and $\sum\nolimits_{k=1}^{K} \pi_t^{(k)} = 1$, can be viewed as a mean-field-like interaction creating a tendency for the agents to move toward a consensus solution and thus providing an inherent feedbacks and cooperative behavior among the agents. Moreover, if $dP_t^{(k)}$ is zero in the above coupled SDEs of Equation~\eqref{Eq2.3}, then system of equations will be reduced to standard gradient dynamics with small additive noise and mean-field-like interaction term. Note that a small additive noise enters only in the second equation and then it propagates to the first equation. Here, we assume that a hypoellipticity property must hold for the above coupled SDEs (e.g., see \cite{r2}, \cite{r3} or \cite{r4} for additional discussions on diffusion processes arising from controllable systems), which is in general related to a strong accessibility property of controllable nonlinear systems that are driven by white noises (e.g., see \cite{r5} concerning the controllability of nonlinear systems, which is closely related to works of Stroock and Varadhan \cite{r6} and Ichihara and \cite{r7}; see also \cite[Section~3]{r8} and \cite[Theorem~2]{r3}). That is, the hypoellipticity assumption further implies that the diffusion process $(\Theta_t^{(k)}, P_t^{(k)})$ has a transition probability density function with strong Feller property. Moreover, we can construct a suitable energy-like Lyapunov function for the system of equations in Equation~\eqref{Eq2.3}, that verifies the stability property, is sufficient for the agents to eventually reach a consensus optimal estimate (e.g., see \cite{r9} for additional discussions on the stability of stochastic dynamical systems).

Then, for an equidistant discretization time $\delta=\tau_{n+1} - \tau_n = T/N$, $n=0,1,2, \ldots, N-1$, with $0=\tau_0 < \tau_1< \ldots < \tau_n<\ldots<\tau_N=T$, of the time interval $[0,T]$, the {\it Euler-Maruyama} approximation for the continuous-time stochastic processes $\bigl(\Theta^{(k)}, P^{(k)}\bigr) =\bigl\{ \bigl(\Theta_t^{(k)},P_t^{(k)}\bigr) \, 0 \le t \le T \bigr\}$, for $k=1,2, \ldots, K$, satisfying the following iterative scheme     
\begin{align}
 \Theta_{n+1}^{(k)} &= \Theta_{n}^{(k)} + \delta P_{n}^{(k)} \notag \\
 P_{n+1}^{(k)} &= (1 - \delta \gamma) P_{n}^{(k)} - \delta \nabla J_k(\Theta_n^{(k)},\hat{Z}^{(k)}) - \delta \eta \left(\Theta_n^{(k)} - \bar{\Theta}_n \right) \notag \\
 &  \quad\quad\quad\quad\quad\quad\quad  + \left(c/\sqrt{\log(\tau_{n+1} + 2)}\right)I_p \Delta W_n^{(k)}, \quad (\Theta_0^{(k)}, P_0^{(k)}) = (\theta_0^{(k)}, p_0^{(k)}), \notag\\
 & \quad\quad\quad\quad\quad \quad k =1,2, \ldots, K, \quad n=0,1,\ldots, N-1,   \label{Eq2.4} \\
 & \quad \text{with} \notag\\
 & \quad\quad\quad\quad\quad \bar{\Theta}_n = \sum\nolimits_{k=1}^{K} \pi_n^{(k)} \Theta_n^{(k)}, \quad \pi_n^{(k)} \ge 0, \quad \sum\nolimits_{k=1}^{K} \pi_t^{(k)} = 1,  \notag
\end{align}
where we have used the notation $\bigl(\Theta_{n}^{(k)}, P_{n}^{(k)}, \pi_n^{(k)} \bigr)=\bigl(\Theta_{\tau_n}^{(k)}, P_{\tau_n}^{(k)}, \pi_{\tau_n}^{(k)}\bigr)$ and the increments $\Delta W_n^{(k)} = (W_{n+1}^{(k)}-W_{n}^{(k)})$ are independent Gaussian random variables with mean $\mathbb{E} (\Delta W_n^{(k)})=0$ and variance $\operatorname{Var}(\Delta W_n^{(k)}) = \delta I_p$ (e.g., see \cite{r10} or \cite{r11}).

\subsection{Decision-making model for the principal} \label{S2.2} In what follows, we formalize the decision-making paradigm that will allow the principal, at each time step $n = 0, 1, 2, \ldots, N-1$, to determine a mixing distribution $\pi_n =\bigl(\pi_n^{(1)}, \pi_n^{(2)}, \ldots, \pi_n^{(K)}\bigr)$ over strategies, with $\pi_n^{(k)} > 0$, for all $k \in \{1,2,\ldots, K\}$, and sum to one, i.e., $\sum\nolimits_{k=1}^K \pi_n^{(k)} = 1$. Moreover, such a mixing distribution represents a set of dynamically apportioning coefficients with respect to each of the agents how their current parameter estimates effectively performed. In particular, we allow the principal to associate to each agent a performance index measure $\rho_n^{(k)} \in [0,1]$ which is determined by the $k^{\rm th}$ agent's current estimate $\Theta_{n}^{(k)}$ in connection with the test dataset $Z^{(K+1)}$ as follows
\begin{align}
 \rho_n^{(k)} = 1 - \exp\left(-\frac{\mu}{m_{K+1}} \sum\nolimits_{i=1}^{m_{K+1}} {\ell}\left(h_{\Theta_{n}^{(k)}}(x_i^{(K+1)}), y_i^{(K+1)}\right) \right), ~~ n &= 0, 1, 2, \ldots, N-1, \notag \\
                                                                                                                                      & \mu > 0. \label{Eq2.5}
\end{align}
where such a performance index is simply an empirical measure, with values between $0$ and $1$, quantifying the lack-of-fit between the model and the test dataset for an appropriately chosen loss function $\ell$ (e.g., see \cite{r12} for similar discussions).

Then, the principal incurred an average (or mixture) loss at each time step which is given by  
\begin{align}
 L_n = \sum\nolimits_{k=1}^K \pi_n^{(k)} \rho_n^{(k)}, \quad n = 0,1, 2, \ldots, N-1. \label{Eq2.6}
\end{align}
Then, the objective here is to present a dynamic allocation strategy, that has a decision-theoretic interpretation, guaranteeing an upper bound for the total overall mixture loss $L$, i.e.,
\begin{align}
 \min \to L &= \sum\nolimits_{n=0}^{N-1} L_n \notag \\
                 &= \sum\nolimits_{n=0}^{N-1}\sum\nolimits_{k=1}^K \pi_n^{(k)} \rho_n^{(k)}. \label{Eq2.7}
\end{align}
Moreover, such a dynamic allocation strategy will ensure the following additional property:
\begin{enumerate} [$\quad$]
 \item[$\quad$] {\it If the performance index measures $\rho_n^{(k)}$, for all $k \in \{1,2, \ldots, K \}$, tend to $0$, as $n \to \infty$. Then, the mean estimate $\bar{\Theta}_{n} = \sum\nolimits_{k=1}^K \pi_n^{(k)} \Theta_{n}^{(k)}$ tends to the optimal parameter estimate $\theta^{\ast} \in \Gamma \subset \mathbb{R}^p$ as $n \to \infty$.}
\end{enumerate}
In order to accomplish the above property, we use a simple dynamic allocation strategy coupled with an iterative updating scheme for determining the mixing distribution $\pi_{n}^{(k)}$, i.e.,
\begin{align}
 \pi_{n}^{(k)} = \frac{\alpha_{n}^{(k)}}{\sum\nolimits_{k=1}^K \alpha_{n}^{(k)}}, \quad k=1,2, \ldots, K, \label{Eq2.8}
\end{align}
while the weighting coefficients are updated according to
\begin{align}
 \alpha_{n+1}^{(k)} = \alpha_{n}^{(k)} \exp \left(-\rho_{n}^{(k)} \log(1/\beta)\right), \quad \beta \in (0,1), \quad n = 0, 1, 2, \ldots, N-1,  \label{Eq2.9}
\end{align}
where such a mixing distribution $\pi_{n}^{(k)}$ can be interpreted as a measure of quality corresponding to each of agents' current parameter estimates in connection with the test dataset. Note that we can assign the initial weighting coefficients $\alpha_{0}^{(k)}$, for $k=1,2, \ldots, K$, arbitrary values, but must be nonnegative and sum to one.

In what follows, if both the principal and the $K$ agents act in accordance with the decision-making/parameter-updating models outlined here (see also Subsection~\ref{S2.1}). Then, following proposition can be used to characterize the upper bound for the total overall mixture loss incurred to the principal.

\begin{proposition} \label{P1}
Let $L_n = \sum\nolimits_{k=1}^K \pi_n^{(k)} \rho_n^{(k)}$, $n = 0,1, 2, \ldots, N-1$, be a sequence of losses incurred to the principal, which is associated with the mixing distribution strategy $\pi_n =\bigl(\pi_n^{(1)}, \pi_n^{(2)}, \ldots, \pi_n^{(K)}\bigr)$ and the performance indices $\rho_n^{(k)}$, for $k=1,2,\ldots, K$, with respect to each of the agents' current parameter estimates. Then, the total overall mixture loss $L$ incurred to the principal, i.e., $L = \sum\nolimits_{n=0}^{N-1}\sum\nolimits_{k=1}^K \pi_n^{(k)} \rho_n^{(k)}$, satisfies the following upper bound condition. 
\begin{align}
L \le -\frac{1}{1-\beta} \log \left(\sum\nolimits_{k=1}^K  \alpha_{N}^{(k)} \right).  \label{Eq2.10}
\end{align}
\end{proposition}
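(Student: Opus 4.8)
The plan is to treat the total (unnormalized) weight $W_n = \sum_{k=1}^K \alpha_n^{(k)}$ as a potential function and track how it evolves under the multiplicative update rule \eqref{Eq2.9}, in the spirit of the standard weighted-majority/Hedge potential argument. First I would compute the one-step change in the potential: substituting \eqref{Eq2.9} gives $W_{n+1} = \sum_{k=1}^K \alpha_n^{(k)} \beta^{\rho_n^{(k)}}$, so the whole argument reduces to controlling $\beta^{\rho_n^{(k)}}$ for the admissible range $\rho_n^{(k)} \in [0,1]$.

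The key step is the elementary convexity inequality $\beta^\rho \le 1 - (1-\beta)\rho$, valid for all $\rho \in [0,1]$ and $\beta \in (0,1)$. This holds because $\rho \mapsto \beta^\rho = e^{\rho \log \beta}$ is convex, so on $[0,1]$ it lies below the chord joining its endpoint values $\beta^0 = 1$ and $\beta^1 = \beta$, and that chord is exactly $1-(1-\beta)\rho$. Inserting this bound yields $W_{n+1} \le W_n - (1-\beta)\sum_{k=1}^K \alpha_n^{(k)}\rho_n^{(k)}$. At this point I would invoke the definition \eqref{Eq2.8} of the mixing distribution, namely $\alpha_n^{(k)} = \pi_n^{(k)} W_n$, to recognize that $\sum_{k=1}^K \alpha_n^{(k)}\rho_n^{(k)} = W_n \sum_{k=1}^K \pi_n^{(k)}\rho_n^{(k)} = W_n L_n$, which couples the potential directly to the incurred loss and produces the multiplicative recurrence $W_{n+1} \le W_n\bigl(1 - (1-\beta)L_n\bigr)$.

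From here the remaining steps are routine bookkeeping. Applying the standard inequality $1 - x \le e^{-x}$ gives $W_{n+1} \le W_n \exp(-(1-\beta)L_n)$, and telescoping this product over $n = 0, 1, \ldots, N-1$ yields $W_N \le W_0 \exp\bigl(-(1-\beta)\sum_{n=0}^{N-1}L_n\bigr) = W_0 \exp(-(1-\beta)L)$. Taking logarithms and rearranging gives $L \le (1-\beta)^{-1}\bigl(\log W_0 - \log W_N\bigr)$. Finally, I would invoke the stated normalization that the initial weights are nonnegative and sum to one, i.e. $W_0 = \sum_{k=1}^K \alpha_0^{(k)} = 1$, so that $\log W_0 = 0$ and the bound collapses to exactly the claimed inequality \eqref{Eq2.10}.

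I expect the only genuinely nontrivial ingredient to be the convexity inequality $\beta^\rho \le 1-(1-\beta)\rho$; everything else is algebra and telescoping. The subtlety worth flagging is that this inequality relies on $\rho_n^{(k)} \in [0,1]$, which is precisely why the performance index in \eqref{Eq2.5} was engineered through the $1 - \exp(\cdot)$ form to take values in $[0,1]$. Without that range restriction the chord bound would fail and the potential argument would not close, so I would make the use of this range assumption explicit in the write-up.
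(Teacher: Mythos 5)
Your proposal is correct and takes essentially the same route as the paper's proof: the same convexity/chord bound $\beta^{\rho} \le 1-(1-\beta)\rho$ on $[0,1]$, the same multiplicative recurrence for the total weight $\sum\nolimits_{k=1}^K \alpha_n^{(k)}$, then $1-x \le e^{-x}$, telescoping, and the normalization $\sum\nolimits_{k=1}^K \alpha_0^{(k)}=1$. If anything, your write-up is slightly tidier, since you apply $1-x\le e^{-x}$ factor by factor and state explicitly the use of $W_0=1$, which the paper invokes only implicitly when passing to its product bound.
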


\begin{proof} Note that, for any $\beta \in (0,1)$ and $\rho_n^{(k)} \in [0, 1]$, we have the following inequality relation\footnote{Note that, for any $k \in \{1,2,\ldots, K\}$, as $\rho_{n}^{(k)} \to 0$, we have the following relation
\begin{align*}
\beta^{\rho_{n}^{(k)}} & = \exp \left(-\rho_{n}^{(k)} \log(1/\beta)\right)\\
                                   &\approx 1- \rho_n^{(k)}\log(1/\beta) + o(\rho_n^{(k)}).
\end{align*}}
\begin{align}
 \exp \left(-\rho_{n}^{(k)} \log(1/\beta)\right) \le 1 - (1- \beta)\rho_n^{(k)}, \quad k=1,2,\ldots, K,\label{Eq2.11}
\end{align}
which is due to the convexity argument for $\exp \left(-\rho_{n}^{(k)} \log(1/\beta)\right)\equiv\beta^{r_n(k)} \le 1 - (1 - \beta)\rho_n^{(k)}$. Then, if we combine Equations~\eqref{Eq2.8} and \eqref{Eq2.9}, we will have
\begin{align}
\sum\nolimits_{k=1}^{K} \alpha_{n+1}^{(k)} &= \sum\nolimits_{k=1}^{K} \alpha_{n}^{(k)} \exp \left(-\rho_{n}^{(k)} \log(1/\beta)\right) \notag\\
                                                                   &\le \sum\nolimits_{k=1}^{K} \alpha_{n}^{(k)} \left(1 - (1- \beta)\rho_n^{(k)} \right) \notag \\
                                                                   & =  \sum\nolimits_{k=1}^{K} \alpha_{n}^{(k)} -  (1- \beta) \sum\nolimits_{k=1}^{K} \alpha_{n}^{(k)} \rho_n^{(k)} \notag \\
                                                                   & = \left(\sum\nolimits_{k=1}^{K} \alpha_{n}^{(k)}\right)  \left(1 - (1- \beta) \sum\nolimits_{k=1}^K \pi_n^{(k)} \rho_n^{(k)} \right). \label{Eq2.12}
\end{align}
Moreover, if we apply repeatedly for $n=0,1,2, \ldots, N-1$ to the above equation, then we have
\begin{align}
\sum\nolimits_{k=1}^{K} \alpha_{N}^{(k)} &\le \prod\nolimits_{n=0}^{N-1}\left(1 - (1- \beta) \sum\nolimits_{k=1}^K \pi_n^{(k)} \rho_n^{(k)} \right) \notag \\
                                                                &\le \exp \left( - (1- \beta) \sum\nolimits_{n=0}^{N-1}\sum\nolimits_{k=1}^K \pi_n^{(k)} \rho_n^{(k)} \right). \label{Eq2.13}
\end{align}
Notice that, due to the inequality $1 + t \le \exp(t)$ for all $t \in [0, \infty)$, the right-hand side of the above equation satisfies the following inequality
\begin{align}
 1 - (1- \beta) \sum\nolimits_{n=0}^{N-1}\sum\nolimits_{k=1}^K \pi_n^{(k)} \rho_n^{(k)} \le \exp \left( - (1- \beta) \sum\nolimits_{n=0}^{N-1}\sum\nolimits_{k=1}^K \pi_n^{(k)} \rho_n^{(k)}\right), \label{Eq2.14}
\end{align}
that further gives us the following result
\begin{align}
\sum\nolimits_{k=1}^{K} \alpha_{N}^{(k)} \le \exp \left( - (1- \beta) \sum\nolimits_{n=0}^{N-1}\sum\nolimits_{k=1}^K \pi_n^{(k)} \rho_n^{(k)} \right). \label{Eq2.15}
\end{align}
Hence, the statement in the proposition follows immediately.
\end{proof}

Here, it is worth mentioning that due to the inherent feedbacks and cooperative behavior among the agents, the collaborative learning offers some advantages in terms of stability and generalization, despite that both the principal and the agents do not necessarily need to have any knowledge of the sample distributions or the quality of each others' datasets. 

\subsection{Algorithm} \label{S2.3} In what follows, we present a generic algorithm for a collaborative learning with principal-agent setting. Note that Equations~\eqref{Eq2.8} and \eqref{Eq2.9} provide an effective method for the principal how to determine a set of aggregation coefficients based on how the agents' current parameter estimates effectively performed in connection with the testing dataset, whereas the agents, as part of their parameter estimate updates, use such aggregation coefficients in their parameter updating model of Equation~\eqref{Eq2.4} (i.e., in their mean-field-like interaction term) that eventually leading the agents to reach a consensus optimal parameter estimate. Figure~\ref{Fig1} depicts the visualization of the principal-agent relationship, where the process of aggregation is realized as a feedback mechanism/collaborative behavior through the mean-like-effect interaction among the agents. Moreover, the following generic algorithm provides a viable way to implement such a collaborative learning problem. 

{\rm \footnotesize

{\bf ALGORITHM:} Collaborative Learning with Principal-Agent Setting
\begin{itemize}
\item[{\bf Input:}] $K+1$ datasets, each with data size $m_k$: $Z^{(k)} = \bigl\{ (x_i^{(k)}, y_i^{(k)})\bigr\}_{i=1}^{m_k}$, for $k=1,2, \ldots, K+1$; an equidistant discretization time $\delta=\tau_{n+1} - \tau_n = T/N$, for $n=0,1,2, \ldots, N-1$, with $0=\tau_0 < \tau_1< \ldots < \tau_n<\ldots<\tau_N=T$, of the time interval $[0,T]$; Parameters: $\beta \in (0,1)$, $\gamma > 0$, $\eta > 0$, and $\mu > 0$.
\item[{\bf 0.}] {\bf Initialize:} Start with $n=0$, and set $\pi_0(k) =\alpha_0(k)= 1/K$, $(\Theta_0^{(k)}, P_0^{(k)}) = (\theta_0,p_0)$ for all $k=1,2,\ldots, K$.
\item[{\bf 1.}] {\bf Agents:} Parameter-updating step ({\bf see Equation~\eqref{Eq2.4}})
\begin{align*}
\bar{\Theta}_{n} &= \sum\nolimits_{k=1}^K \pi_{n}(k) \Theta_{n}^{(k)}\\
 \Theta_{n+1}^{(k)} &= \Theta_{n}^{(k)} + \delta P_{n}^{(k)} \\
 P_{n+1}^{(k)} &= (1 - \delta \gamma) P_{n}^{(k)} - \delta \nabla J_k(\Theta_n^{(k)},\hat{Z}^{(k)}) - \delta \eta \left(\Theta_n^{(k)} - \bar{\Theta}_n \right) \\
 &  \quad\quad\quad\quad\quad\quad\quad  + \left(c/\sqrt{\log(\tau_{n+1} + 2)}\right)I_p \Delta W_n^{(k)}, \quad (\Theta_0^{(k)}, P_0^{(k)}) = (\theta_0^{(k)}, p_0^{(k)})
  \end{align*}
\item[{\bf 2.}] {\bf Principal:} Decision-making step ({\bf see Equations~\eqref{Eq2.5}, \eqref{Eq2.8} and \eqref{Eq2.9}})
 \begin{itemize}
\item[{\bf i.}] Compute the performance index measure associate with each agent's parameter estimate ({\bf see Equation~\eqref{Eq2.5}})
\begin{align*}
\rho_n^{(k)} = 1 - \exp\left(-\frac{\mu}{m_{K+1}} \sum\nolimits_{i=1}^{m_{K+1}} {\ell}\left(h_{\Theta_{n}^{(k)}}(x_i^{(K+1)}), y_i^{(K+1)}\right) \right)
\end{align*}
 \item[{\bf ii.}] Update the weighting coefficients ({\bf see Equation~\eqref{Eq2.9}})
\begin{align*}
\alpha_{n+1}^{(k)} = \alpha_{n}^{(k)} \exp \left(-\rho_{n}^{(k)} \log(1/\beta)\right)
\end{align*}
 \item[{\bf iii.}] Update the aggregation coefficients ({\bf see Equation~\eqref{Eq2.8}})
\begin{align*}
  \pi_{n+1}^{(k)} = \frac{\alpha_{n+1}^{(k)}}{\sum\nolimits_{k=1}^K \alpha_{n+1}^{(k)}}
\end{align*}
 \end{itemize}
 for $k=1,2, \ldots, K$.
 \item[{\bf 3.}] Increment $n$ by $1$ and, then repeat Steps $1$ and $2$ until convergence, i.e., $\Vert \bar{\Theta}_{n+1} - \bar{\Theta}_{n}\Vert \le {\rm tol}$, or $n=N-1$.
 \item[{\bf Output:}] An optimal parameter value $\bar{\Theta}_{N} = \theta^{\ast}$.
\end{itemize}}
\begin{figure}[h]
\begin{center}
 \includegraphics[scale=0.9]{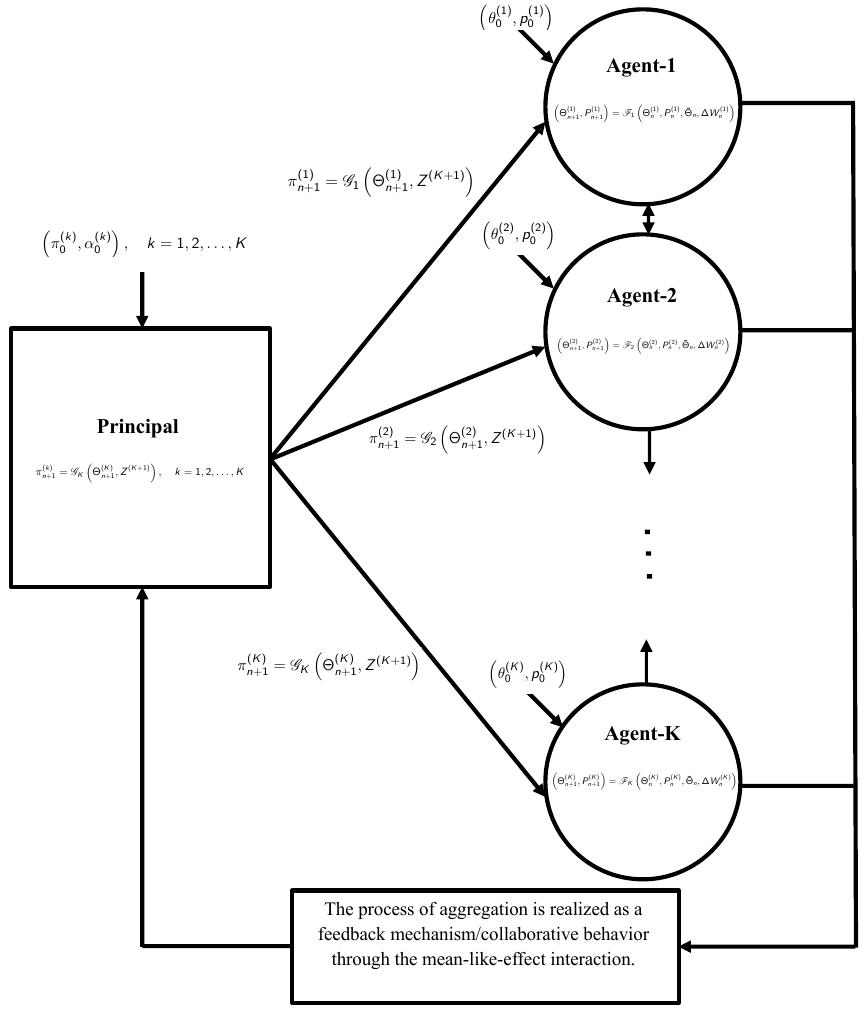}
  \caption{Visualization of the principal-agent relationship.} \label{Fig1}
{\scriptsize \begin{align*}
   {\rm \bf \text Notation:} \quad &{\rm Equation}~\eqref{Eq2.4} \Leftrightarrow \left(\Theta_{n+1}^{(k)}, P_{n+1}^{(k)}\right) = \mathscr{F}_k \left(\Theta_{n}^{(k)}, P_{n}^{(k)}, \bar{\Theta}_{n}, \Delta W_{n+1}^{(k)}\right),  \quad k=1,2, \ldots, K\\
 & {\rm Equations}~\eqref{Eq2.5}, ~\eqref{Eq2.8} ~ {\rm and} ~\eqref{Eq2.9} \Leftrightarrow \pi_{n+1}^{(k)} = \mathscr{G}_k\left(\Theta_{n+1}^{(k)}, Z^{(K+1)}\right), \quad k=1,2, \ldots, K
 \end{align*}}
 \end{center}
\end{figure} 

 \section{Numerical results and discussions} \label{S3} In this section, we presented numerical results for a simple nonlinear regression problem. In our simulation study, the numerical data for the population of {\it Paramecium caudatum}, which is a species of unicellular organisms, grown in a nutrient medium over $24$ days (including the starting day of the experiment), were digitized using the Software: WebPlotDigitizer \cite{r15} from the figures in the paper by F.G. Gause \cite{r16} (see also \cite[pp.~102]{r17}). Here, our interest is to estimate the parameters for the population growth model, on the assumption that the model obeys the following logistic law, i.e.,
 \begin{align*}
  N_{\theta}(t) = \frac{ N_0\,N_e}{N_0 + (N_e - N_0)\exp(-r t)}, \quad \theta = (N_0, N_e, r),
\end{align*}
 where $N_{\theta}(t)$ is the number of {\it Paramecium caudatum} population at time $t$ in $[{\rm Days}]$, and $N_0$, $N_e$ and $r$ (i.e., $\theta = (N_0, N_e, r)$) are the parameters to be estimated using the digitized dataset obtained from Gause's paper, i.e., the original dataset $Z = \bigl\{(t_i, N_i)\bigr\}_{i=1}^{24}$, with a total of $24$ digitized dataset points, $t_i$ is the time in $[{\rm Days}]$ and $N_i$ is the corresponding number of {\it Paramecium caudatum} population. We specifically used {\rm One-Principal} and {\rm Two-Agents} in our collaborative learning problem (i.e., the number of agents is $K=2$). Moreover, we partitioned the original dataset into three separate subsets as follows:
 \begin{enumerate} [(i)]
 \item The datasets corresponding from ${\rm Day}$-1 to ${\rm Day}$-8 for the Agent-1, with data size of $m_1=8$, and from ${\rm Day}$-16 to ${\rm Day}$-24 for the Agent-2, with data size of $m_2=9$.
 \item The dataset corresponding from ${\rm Day}$-9 to ${\rm Day}$-15 for the Principal, with data size of $m_3=7$. 
 \end{enumerate}
 Note that we used a simple Euler--Maruyama time discretization approximation scheme to solve numerically the corresponding system of SDEs (cf. Equation~\eqref{Eq2.3}), with an equidistance discretization time $\delta = 1 \times 10^{-5}$ of the time interval $[0,1]$. For both model training and generalization processes, we used the usual quadratic loss function 
 \begin{align*}
 J_k(\theta^{(k)}, Z^{(k)})=(1/m_k)\sum\nolimits_{i=1}^{m_k} \left(N_{\theta}(t_i^{(k)}) - N_i^{(k)}\right)^2, \quad k=1,2,3,
\end{align*}
that quantifying the lack-of-fit between the model and the corresponding datasets. Figures~\ref{Fig2} shows both the digitized original dataset from Gause's paper and the population growth model $N(t)$, with optimal consensus parameter values $N_0^{\ast} = 1.1224$, $N_e^{\ast} = 229.9285$ and $r^{\ast} =0.7259$, versus time $t$ in $[{\rm Days}]$ on the same plot. In our simulation, we used the following numerical values for: $\beta = 0.5$,  $\gamma = 1$, $\eta = 0.01$, $\mu=0.001$, and a noise level of $c = 0.001$ (i.e., satisfying $\beta \in (0,1)$, $\gamma > 0$, $\eta > 0$ and $\mu > 0$, see also the Algorithm in Subsection~\ref{S2.3}). 
\begin{figure}[h]
\begin{center}
 \includegraphics[scale=0.195]{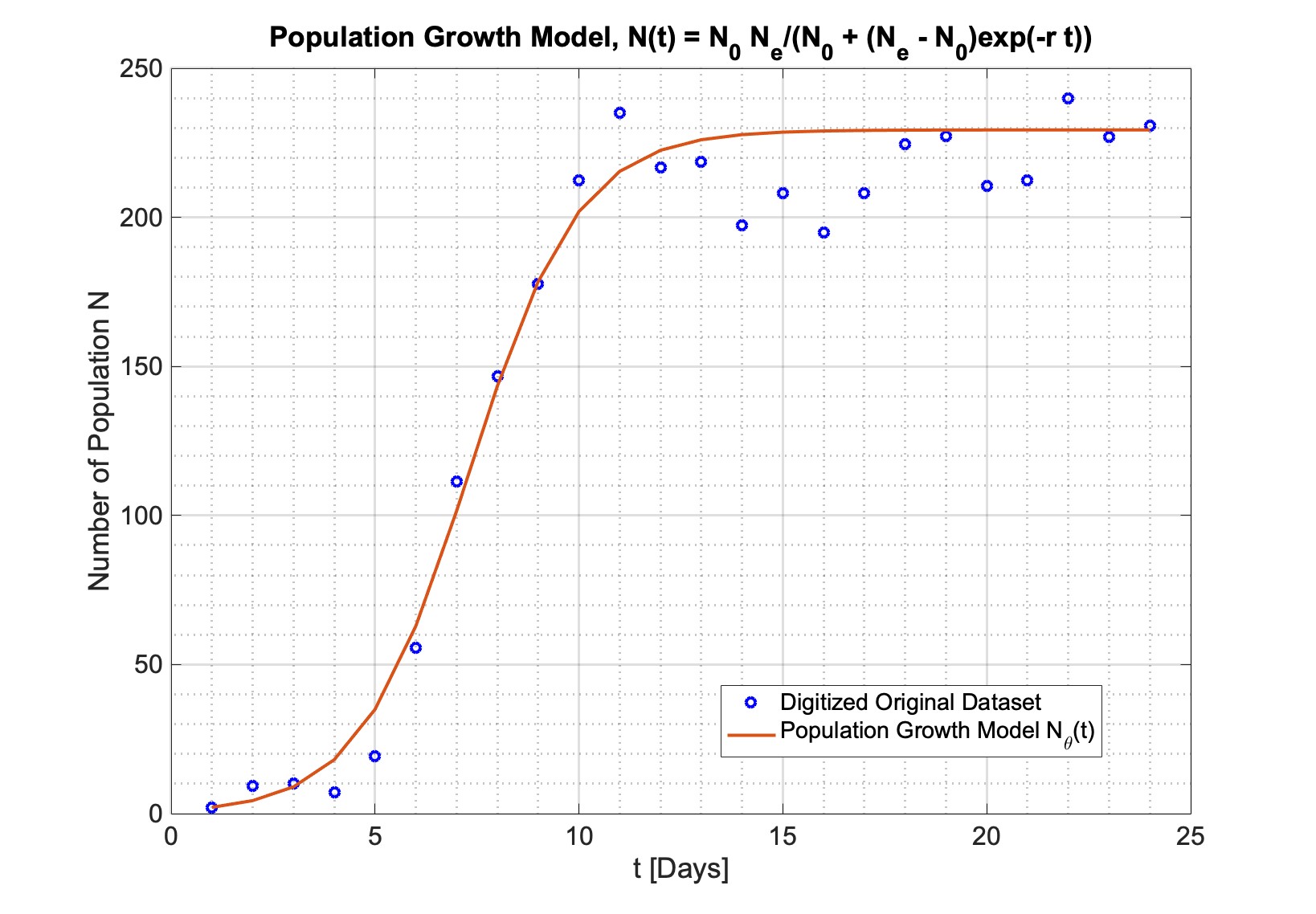}
  \caption{Plots for the original dataset and the population growth model.} \label{Fig2}
\end{center}
\end{figure} 
Moreover, Figure~\ref{Fig3} shows the evolutions for the mean parameter estimates. Here, we can see that the proposed learning framework, with principal-agent setting, allowed us to determine optimal consensus parameter values for the model parameters $N_0$, $N_e$ and $r$, despite that the sample distributions of the datasets are are quite different for both the principal and the agents (see also Figure~\ref{Fig2}). Finally, it is worth remarking that such collaborative learning framework could be interesting to investigate from game-theoretic perspective (e.g., see \cite{r13} and \cite{r14} for an interesting study in computer science literature).
\begin{figure}[h]
\begin{center}
 \includegraphics[scale=0.175]{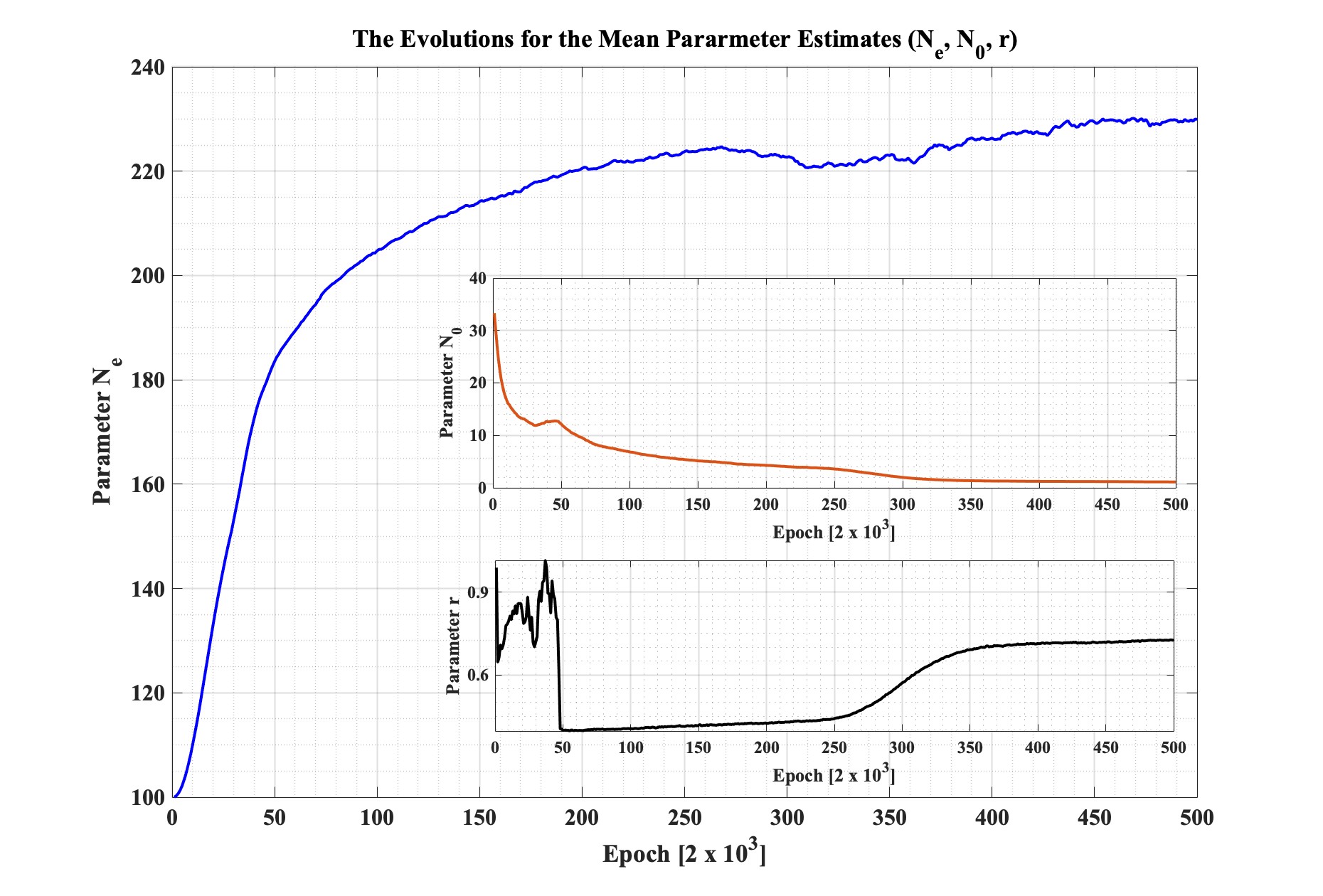}
  \caption{Plots for the evolution of the mean parameter estimates.} \label{Fig3}
\end{center}
\end{figure}


\begin{thebibliography}{99}

\bibitem{r1a}
{S. Geman \& C. Hwang}. Diffusion for Global Optimization. {\em SIAM J. Contr. Optim.,}  24(5), 1031--1043, 1986.

\bibitem{r1b}
{G.K. Befekadu}. Embedding generalization within the learning dynamics: An approach based-on sample path large deviation theory. {\em arXiv preprint}, arXiv:2408.02167, 2024.

\bibitem{r2}
{L. H\"{o}rmander.}  Hypoelliptic second order differential operators. {\em Acta. Math.,} 119, 147--171, 1967.

\bibitem{r3}
{D.L. Elliott.} Diffusions on manifolds arising from controllable systems. In: Mayne DQ, Brockett RW (eds.) {\em Geometric methods in system theory}. Reidel Publ. Co., Dordrecht, 285-294, 1973.

\bibitem{r4}
{ G.K. Befekadu \& P.J. Antsaklis}. On the asymptotic estimates for exit probabilities and minimum exit rates of diffusion processes pertaining to a chain of distributed control systems. {\em SIAM J. Contr. Optim.,} 53(5), 2297--2318, 2015.

\bibitem{r5}
{H.J. Sussmann \& V. Jurdjevic.} Controllability of nonlinear systems. {\em J. Differ. Equ.,} 12, 95--116, 1972.

\bibitem{r6}
{D. Stroock \& S.R.S. Varadhan}. On degenerate elliptic-parabolic operators of second order and their associated diffusions. {\em Comm. Pure Appl. Math.,} 25, 651--713, 1972.

\bibitem{r7}
{K. Ichihara \& H. Kunita.}  A classification of the second order degenerate elliptic operators and its probabilistic characterization. {\em Z Wahrscheinlichkeitstheor Verw. Geb.,} 30, 253--254, 1974.

\bibitem{r8}
{K. Amano}. A necessary condition for hypoellipticity of degenerate elliptic–parabolic operators. {\em Tokyo J. Math.,} 2, 111--120, 1979.

\bibitem{r9}
{H.J. Kushner}. On the stability of stochastic dynamical systems. {\em Proc. Nat. Acad. Sci.,}53, 8--12, 1965.

\bibitem{r10}
{D.J. Higham}. An algorithmic introduction to numerical simulation of stochastic differential equations. {\em SIAM Rev.,} 43(3), 525--546. 2001. 

\bibitem{r11}
{E. Platen}. An introduction to numerical methods for stochastic differential equations. {\em Acta Numer.,} 8, 197--246, 1999. 

\bibitem{r12}
{G.K. Befekadu}. A naive aggregation algorithm for improving generalization in a class of learning problems. {\em arXiv preprint}, arXiv::2409.04352, 2024.

\bibitem{r13}
{V.G. Vovk}. A game of prediction with expert advice. {\em J. Comput. Syst. Sci.,} 56(2), 153--173, 1998.

\bibitem{r16}
{G.F. Gause}. Experimental analysis of Vito Volterra’s mathematical theory of the struggle for existence. {\em Science,} 79(2036), 16--17, 1934.

\bibitem{r17}
{G.F. Gause}. {\em The struggle for existence: A classic of mathematical biology and ecology}. United States: Dover Publications, 2019.

\bibitem{r14}
{N. Littlestone \& M. Warmuth}. The weighted majority algorithm. {\em Inf. Comput.,} 108(2), 212--261, 1994.
 
\bibitem{r15}
{A. Rohatgi}. Software: WebPlotDigitizer Version 5. May 14, 2024. Available at: \url{https://automeris.io}.

\end{thebibliography}
\end{document}